\newcommand{\matt}[1]{}
\newcommand{\sj}[1]{}
\newcommand{\sebastian}[1]{}
\newcommand{\justin}[1]{}
\newcommand{\Ns}{\mathcal N}
\newcommand{\Ps}{\mathcal P}
\newcommand{\Xs}{\mathcal X}
\newcommand{\ep}{\varepsilon}
\newcommand{\reals}{\mathbb R}
\newcommand{\E}{\mathbb{E}}
\newcommand{\argmin}{\operatornamewithlimits{argmin}}
\newtheorem{theorem}{Theorem}[section]
\newtheorem{lemma}{Lemma}[section]
\newtheorem{corollary}{Corollary}[section]
\theoremstyle{definition}
\newtheorem{definition}{Definition}[section]
\newtheorem{remark}{Remark}[section]
\title{Parallel Streaming Wasserstein Barycenters}
\author{Matthew Staib}
\author{Sebastian Claici}
\author{Justin Solomon}
\author{Stefanie Jegelka}
\affil{
    Computer Science and Artificial Intelligence Laboratory\\
    Massachusetts Institute of Technology\\
    \{mstaib, sclaici, jsolomon, stefje\}@mit.edu
}
\date{}
\begin{document} 
\maketitle

\begin{abstract} 
  Efficiently aggregating data from different sources is a challenging problem, particularly when samples from each source are distributed differently. These differences can be inherent to the inference task or present for other reasons: sensors in a sensor network may be placed far apart, affecting their individual measurements. Conversely, it is computationally advantageous to split Bayesian inference tasks across subsets of data, but data need not be identically distributed across subsets. One principled way to fuse probability distributions is via the lens of optimal transport: the Wasserstein barycenter is a single distribution that summarizes a collection of input measures while respecting their geometry. However, computing the barycenter scales poorly and requires discretization of all input distributions and the barycenter itself.
  Improving on this situation, we present a scalable, communication-efficient, parallel algorithm for computing the Wasserstein barycenter of arbitrary distributions. Our algorithm can operate directly on continuous input distributions and is optimized for streaming data. Our method is even robust to nonstationary input distributions and produces a barycenter estimate that tracks the input measures over time. The algorithm is semi-discrete, needing to discretize only the barycenter estimate. To the best of our knowledge, we also provide the first bounds on the quality of the approximate barycenter as the discretization becomes finer. Finally, we demonstrate the practical effectiveness of our method, both in tracking moving distributions on a sphere, as well as in a large-scale Bayesian inference task.
\end{abstract}


\section{Introduction}

A key challenge when scaling up data aggregation occurs when data comes from 
multiple sources, each with its own 
inherent structure.  Sensors in a sensor network may be configured differently or placed far apart, but each individual sensor simply measures a different view of the same quantity. Similarly, user data collected by a server in California will differ from that collected by a server in Europe: the data samples may be independent but are not identically distributed.

One reasonable approach to aggregation in the presence of multiple data sources is to perform inference on each piece independently and fuse the results. This is possible when the data can be distributed randomly, using methods akin to distributed optimization~\citep{zhang2013communication,zhang2015divide}.
However, when the data is \emph{not} split in an i.i.d.\ way, Bayesian inference on different subsets of observed data yields slightly different ``subset posterior'' distributions for each subset that must be combined~\citep{minsker_scalable_2014}. Further complicating matters, 
data sources may be nonstationary. How can we fuse these different data sources for joint analysis in a consistent and structure-preserving manner?

\matt{wording here sounds like we are the first, when we're not quite}
We address this question using ideas from the theory of \emph{optimal transport}. Optimal transport gives us a principled way to measure distances between measures that takes into account the underlying space on which the measures are defined. Intuitively, the optimal transport distance between two distributions measures the amount of \emph{work} one would have to do to move all mass from one distribution to the other. Given $J$ input measures $\{\mu_j\}_{j=1}^J$, it is natural, in this setting, to ask for a measure $\nu$ that minimizes the total squared distance to the input measures. This measure $\nu$ is called the \emph{Wasserstein barycenter} of the input measures~\citep{agueh_barycenters_2011}, and should be thought of as an aggregation of the input measures which preserves their geometry. This particular aggregation enjoys many nice properties: in the earlier Bayesian inference example, aggregating subset posterior distributions via their Wasserstein barycenter yields guarantees on the original inference task~\citep{srivastava_wasp:_2015}.

If the measures $\mu_j$ are discrete, their barycenter can be computed relatively efficiently via either a sparse linear program~\citep{anderes_discrete_2016}, or regularized projection-based methods
~\citep{cuturi_fast_2014,benamou_iterative_2015,ye_fast_2017,cuturi_smoothed_2016}. However, \textbf{1.} these techniques scale poorly with the support of the measures, and quickly become impractical as the support becomes large. 
\textbf{2.} When the input measures are continuous, to the best of our knowledge the only option is to discretize them via sampling, but the rate of convergence to the true (continuous) barycenter is not well-understood.
These two confounding factors make it difficult to utilize barycenters in scenarios like parallel Bayesian inference where the measures are continuous and a fine approximation is needed. These are the primary issues we work to address in this paper.

Given sample access to $J$ potentially continuous distributions $\mu_j$, we propose a communication-efficient, parallel algorithm to estimate their barycenter. Our method can be parallelized to $J$ worker machines, and the messages sent between machines are merely single integers. We require 
a discrete approximation only of the barycenter itself, making our algorithm \emph{semi-discrete}, and our algorithm scales well to fine approximations (e.g. $n \approx 10^6$). In contrast to previous work, we provide guarantees on the quality of the approximation as $n$ increases. These rates apply to the general setting in which the $\mu_j$'s are defined on manifolds, with applications to directional statistics~\citep{sra_directional_2016}. Our algorithm is based on stochastic gradient descent as in~\citep{genevay_stochastic_2016} and hence is robust to gradual changes in the distributions: as the $\mu_j$'s change over time, we maintain a moving estimate of their barycenter, a task which is not possible using current methods without 
solving a large linear program in each iteration.

We emphasize that we aggregate the input distributions into a summary, the barycenter, which is itself a distribution. Instead of performing any single domain-specific task such as clustering or estimating an expectation, we can simply compute the barycenter of the inputs and process it later any arbitrary way. This generality coupled with the efficiency and parallelism of our algorithm yields immediate applications in fields from large scale Bayesian inference to e.g.\ streaming sensor fusion. 

\paragraph{Contributions.}
\textbf{1.} We give a communication-efficient and fully parallel algorithm for computing the barycenter of a collection of distributions. Although our algorithm is semi-discrete, we stress that the input measures can be \emph{continuous}, and even \emph{nonstationary}. \textbf{2.} We give bounds on the quality of the recovered barycenter as our discretization becomes finer. These are the first such bounds we are aware of, and they apply to measures on arbitrary compact and connected manifolds. \textbf{3.} We demonstrate the practical effectiveness of our method, both in tracking moving distributions on a sphere, as well as in a real large-scale Bayesian inference task.

\subsection{Related work}

\paragraph{Optimal transport.}

A comprehensive treatment of optimal transport and its many applications is beyond the scope of our work. We refer the interested reader to the detailed monographs by \citet{villani_optimal_2009} and \citet{santambrogio_optimal_2015}. Fast algorithms for optimal transport have been developed in recent years via Sinkhorn's algorithm~\citep{cuturi_sinkhorn_2013} and in particular stochastic gradient methods~\citep{genevay_stochastic_2016}, on which we build in this work. These algorithms have enabled several applications of optimal transport and Wasserstein metrics to machine learning, for example in supervised learning~\citep{frogner_learning_2015}, unsupervised learning~\citep{montavon_wasserstein_2016,arjovsky_wasserstein_2017-1}, and domain adaptation~\citep{courty_optimal_2016}. \matt{fit in word mover's distance? that's the only relevant one Genevay has that we don't}
Wasserstein barycenters in particular have been applied to a wide variety of problems including
fusion of subset posteriors~\citep{srivastava_wasp:_2015},
distribution clustering~\citep{ye_fast_2017},
shape and texture interpolation \citep{solomon_convolutional_2015,rabin_wasserstein_2011}, and
multi-target tracking~\citep{baum_wasserstein_2015}.

When the distributions $\mu_j$ are discrete, transport barycenters can be computed relatively efficiently via either a sparse linear program~\citep{anderes_discrete_2016} or regularized projection-based methods
~\citep{cuturi_fast_2014,benamou_iterative_2015,ye_fast_2017,cuturi_smoothed_2016}.
In settings like posterior inference, however, the distributions $\mu_j$ are likely continuous rather than discrete, and the most obvious viable approach requires discrete approximation of each $\mu_j$. The resulting discrete barycenter converges to the true, continuous barycenter as the approximations become finer~\citep{boissard_distributions_2015,kim_wasserstein_2017}, but 
the rate of convergence is not well-understood,
 and finely approximating each $\mu_j$ yields a very large linear program.

 \paragraph{Scalable Bayesian inference.}
 Scaling Bayesian inference to large datasets has become an important topic in recent years. There are many approaches to this, ranging from parallel Gibbs sampling~\citep{newman2008distributed,johnson2013analyzing} to stochastic and streaming algorithms~\citep{welling2011bayesian,chen2014stochastic,hoffman2013stochastic,broderick_streaming_2013}. For a more complete picture, we refer the reader to the survey by~\citet{angelino2016patterns}.

 One promising method is via subset posteriors: instead of sampling from the posterior distribution given by the full data, the data is split into smaller tractable subsets. Performing inference on each subset yields several subset posteriors, which are biased but can be combined via their Wasserstein barycenter~\citep{srivastava_wasp:_2015}, with provable guarantees on approximation quality. This is in contrast to other methods that rely on summary statistics to estimate the true posterior~\citep{minsker_scalable_2014, neiswanger2013asymptotically} and that require additional assumptions. In fact, our algorithm works with arbitrary measures and on manifolds.

\section{Background}
Let $(\Xs, d)$ be a metric space. Given two probability measures $\mu \in \Ps(\Xs)$ and $\nu \in \Ps(\Xs)$ and a cost function $c : \Xs \times \Xs \to [0, \infty)$, the Kantorovich optimal transport problem asks for a solution to
\begin{equation}
  \label{eq:kantorovich}
  \inf \left\{\int_{\Xs\times \Xs} c(x, y)d\gamma(x, y) : \gamma \in \Pi(\mu, \nu) \right\}
\end{equation}
where $\Pi(\mu, \nu)$ is the set of measures on the product space $\Xs \times \Xs$ whose marginals evaluate to $\mu$ and $\nu$, respectively.

Under mild conditions on the cost function (lower semi-continuity) and the underlying space (completeness and separability), problem~(\ref{eq:kantorovich}) admits a solution~\cite{santambrogio_optimal_2015}. Moreover, if the cost function is of the form $c(x,y)=d(x, y)^p$, the optimal transportation cost is a distance metric on the space of probability measures. This is known as the \emph{Wasserstein distance} and is given by
\begin{equation}
  \label{eq:wasserstein}
  W_p(\mu, \nu) = \left(\inf_{\gamma \in \Pi(\mu, \nu)} \int_{\Xs \times \Xs} d(x, y)^p d\gamma(x, y)\right)^{1/p}.
\end{equation}

Optimal transport has recently attracted much attention in machine learning and adjacent communities~\citep{frogner_learning_2015,montavon_wasserstein_2016,courty_optimal_2016,peyre_gromov-wasserstein_2016,rolet_fast_2016,arjovsky_wasserstein_2017-1}. When $\mu$ and $\nu$ are discrete measures, problem~\eqref{eq:wasserstein} is a linear program, although faster regularized methods based on Sinkhorn iteration are used in practice~\cite{cuturi_sinkhorn_2013}. Optimal transport can also be computed using stochastic first-order methods~\cite{genevay_stochastic_2016}.

Now let $\mu_1, \dots, \mu_J$ be measures on $\Xs$. The Wasserstein barycenter problem, introduced by~\citet{agueh_barycenters_2011}, is to find a measure $\nu \in \Ps(\Xs)$ that minimizes the functional
\begin{equation}
\label{eq:barycenter}
  F[\nu] := \frac1J \sum_{j=1}^J W_2^2(\mu_j, \nu).
\end{equation}
Finding the \emph{barycenter} $\nu$ is the primary problem we address in this paper.
When each $\mu_j$ is a discrete measure, the exact barycenter can be found via linear programming~\citep{anderes_discrete_2016}, and many of the regularization techniques apply for approximating it~\citep{cuturi_fast_2014,cuturi_smoothed_2016}. However, the problem size grows quickly with the size of the support. When the measures $\mu_j$ are truly continuous, we are aware of only one strategy: sample from each $\mu_j$ in order to approximate it by the empirical measure, and then solve the discrete barycenter problem.

We directly address the problem of computing the barycenter when the input measures can be continuous. We solve a semi-discrete problem, where the target measure is a finite set of points, but we do not discretize any other distribution.





\section{Algorithm}
We first provide some background on the dual formulation of optimal transport.
Then we derive a useful form of the barycenter problem, provide an algorithm to solve it, and prove convergence guarantees. Finally, we demonstrate how our algorithm can easily be parallelized.

\subsection{Mathematical preliminaries}
The primal optimal transport problem~\eqref{eq:kantorovich} admits a dual problem~\citep{santambrogio_optimal_2015}:
\begin{equation}
  \label{eq:dual}
  OT_c(\mu,\nu) = \sup_{v \text{ 1-Lipschitz}} \left\{ \E_{Y \sim \nu}[v(Y)] + \E_{X \sim \mu}[v^c(X)] \right\},
\end{equation}
where $v^c(x) = \inf_{y\in\Xs} \{c(x,y) - v(y)\}$ is the \emph{$c$-transform} of $v$~\citep{villani_optimal_2009}. When $\nu = \sum_{i=1}^n w_i \delta_{y_i}$ is discrete, problem~\eqref{eq:dual} becomes the \emph{semi-discrete} problem
\begin{equation}
  \label{eq:semi-discrete}
  OT_c(\mu,\nu) = \max_{v \in \reals^n} \left\{ \langle w, v \rangle + \E_{X \sim \mu}[h(X,v)] \right\},
\end{equation}
where we define $h(x,v) = v^c(x) = \min_{i=1,\dots,n} \{c(x,y_i) - v_i\}$.
Semi-discrete optimal transport admits efficient algorithms~\citep{levy_numerical_2015,kitagawa_convergence_2016}; \citet{genevay_stochastic_2016} in particular observed that given sample oracle access to $\mu$, the semi-discrete problem can be solved via stochastic gradient ascent. Hence optimal transport distances can be estimated even in the semi-discrete setting.

\subsection{Deriving the optimization problem}
Absolutely continuous measures can be approximated arbitrarily well by discrete distributions with respect to Wasserstein distance~\citep{kloeckner_approximation_2012}. Hence one natural approach to the barycenter problem~\eqref{eq:barycenter} is to approximate the true barycenter via discrete approximation: we fix $n$ support points $\{y_i\}_{i=1}^n \in \Xs$ and search over assignments of the mass $w_i$ on each point $y_i$. In this way we wish to find the discrete distribution $\nu_n = \sum_{i=1}^n w_i \delta_{y_i}$ with support on those $n$ points which optimizes
\begin{align}
  \label{eq:semi-discrete-barycenter}
  \min_{w\in\Delta_n} F(w)
  &= \min_{w\in\Delta_n} \frac1J \sum_{j=1}^J W_2^2(\mu_j,\nu_n) \\
  &= \min_{w\in\Delta_n} \left\{ \frac1J \sum_{j=1}^J \max_{v^j \in \reals^n} \left\{ \langle w, v^j \rangle + \E_{X_j \sim \mu_j} [h(X_j, v^j)] \right\} \right\}.
\end{align}
where we have defined $F(w) := F[\nu_n] = F[\sum_{i=1}^n w_i \delta_{y_i}]$ and used the dual formulation from equation~(\ref{eq:semi-discrete}).
in Section~\ref{sec:consistency}, we discuss the effect of different choices for the support points $\{y_i\}_{i=1}^n$.

Noting that the variables $v^j$ are uncoupled, we can rearrange to get the following problem:
\begin{equation}
\label{eq:minmax}
  \min_{w \in \Delta_n} \max_{v^1,\dots,v^J} \frac1J \sum_{j=1}^J \left[ \langle w, v^j \rangle + \E_{X_j \sim \mu_j}[h(X_j, v^j)] \right].
\end{equation}
Problem~\eqref{eq:minmax} is convex in $w$ and jointly concave in the $v^j$, and we can compute an unbiased gradient estimate for each by sampling $X_j \sim \mu_j$. Hence, we could solve this saddle-point problem via simultaneous (sub)gradient steps as in~\citet{nemirovski_efficient_2005}. Such methods are simple to implement, but in the current form we must project onto the simplex $\Delta_n$ at each iteration. This requires only $O(n\log n)$ time~\citep{held_validation_1974,michelot_finite_1986,duchi_efficient_2008} but makes it hard to decouple the problem across each distribution $\mu_j$. Fortunately, we can reformulate the problem in a way that avoids projection entirely. By strong duality, Problem~\eqref{eq:minmax} can be written as
\begin{align}
  &\max_{v^1,\dots,v^J} \min_{w \in \Delta_n} \left\{ \left\langle \frac1J \sum_{j=1}^J v^j, w \right\rangle + \frac1J \sum_{j=1}^J \E_{X_j \sim \mu_j}[h(X_j, v^j)] \right\} \\
  = &\max_{v^1,\dots,v^J} \left\{ \min_i \left\{ \frac1J \sum_{j=1}^J v^j_i \right\} + \frac1J \sum_{j=1}^J \E_{X_j \sim \mu_j}[h(X_j, v^j)] \right\}. \label{eq:maxmin}
\end{align}
Note how the variable $w$ disappears: for any fixed vector $b$, minimization of $\langle b, w \rangle$ over $w\in\Delta_n$ is equivalent to finding the minimum element of $b$. The optimal $w$ can also be computed in closed form when the barycentric cost is entropically regularized as in~\citep{bigot_regularization_2016}, which may yield better convergence rates but requires dense updates that, e.g., need more communication in the parallel setting. In either case, we are left with a concave maximization problem in $v^1,\dots,v^J$, to which we can directly apply stochastic gradient ascent. Unfortunately the gradients are still not sparse and decoupled. We obtain sparsity after one final transformation of the problem: by replacing each $\sum_{j=1}^J v_i^j$ with a variable $s_i$ and enforcing this equality with a constraint, we turn problem~\eqref{eq:maxmin} into the constrained problem
\begin{align}
  \max_{s,v^1,\dots,v^J} \frac1J \sum_{j=1}^J \left[ \frac1J \min_i s_i + \E_{X_j \sim \mu_j}[h(X_j, v^j)] \right] \quad \text{s.t.} \quad s=\sum_{j=1}^J v^j. \label{eq:maxmin-constrained}
\end{align}

\subsection{Algorithm and convergence}

\begin{wrapfigure}{R}{0.45\textwidth}
  \begin{minipage}[t]{\linewidth}
    \vspace{-.8cm}
    \begin{algorithm}[H]
      \caption{Subgradient Ascent}
      \label{alg:subgrad-project}
      \begin{algorithmic}
        \State $s,v^1,\dots,v^J \leftarrow 0_n$
        \Loop
          \State Draw $j \sim \text{Unif}[1,\dots,J]$
          \State Draw $x \sim \mu_j$
          \State $i_W \leftarrow \argmin_i \{c(x,y_i) - v^j_i\}$
          \State $i_M \leftarrow \argmin_i s_i$
          \State $v^j_{i_W} \leftarrow v^j_{i_W} - \gamma$ \hfill \Comment{Gradient update}
          \State $s_{i_M} \leftarrow s_{i_M} + \gamma/J$ \hfill \Comment{Gradient update}
          \State $v^j_{i_W} \leftarrow v^j_{i_W} + \gamma/2$ \hfill \Comment{Projection}
          \State $v^j_{i_M} \leftarrow v^j_{i_M} + \gamma/(2J)$ \hfill \Comment{Projection}
          \State $s_{i_W} \leftarrow s_{i_W} - \gamma/2$ \hfill \Comment{Projection}
          \State $s_{i_M} \leftarrow s_{i_M} - \gamma/(2J)$ \hfill \Comment{Projection}
        \EndLoop
      \end{algorithmic}
    \end{algorithm}
    \vspace{-.6cm}
  \end{minipage}
\end{wrapfigure}

We can now solve this problem via stochastic projected subgradient ascent. This is described in Algorithm~\ref{alg:subgrad-project}; note that the sparse adjustments after the gradient step are actually projections onto the constraint set with respect to the $\ell_1$ norm. Derivation of this sparse projection step is given rigorously in Appendix~\ref{sec:appendix-projection}. Not only do we have an optimization algorithm with sparse updates, but we can even recover the optimal weights $w$ from standard results in online learning~\citep{freund_adaptive_1999}. Specifically, in a zero-sum game where one player plays a no-regret learning algorithm and the other plays a best-response strategy, the average strategies of both players converge to optimal:
\begin{theorem}
\label{thm:optimization}
Perform $T$ iterations of stochastic subgradient ascent on $u=(s,v^1,\dots,v^J)$ as in Algorithm~\ref{alg:subgrad-project}, and use step size $\gamma = \frac{R}{4\sqrt{T}}$, assuming $\lVert u_t - u^* \rVert_1 \leq R$ for all $t$. Let $i_t$ be the minimizing index chosen at iteration $t$, and write $\overline w_T = \frac1T \sum_{t=1}^T e_{i_t}$. Then we can bound
\begin{equation}
  \E[F(\overline w_T) - F(w^*)] \leq 4R / \sqrt T.
\end{equation}
The expectation is with respect to the randomness in the subgradient estimates $g_t$.
\end{theorem}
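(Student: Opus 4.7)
The plan is to interpret Algorithm~\ref{alg:subgrad-project} as a two-player zero-sum game and invoke the Freund--Schapire reduction from no-regret online learning to saddle-point optimization. Let $\Phi(w, v) := \frac{1}{J}\sum_{j=1}^J [\langle w, v^j\rangle + \E_{X_j\sim\mu_j}[h(X_j,v^j)]]$, so that $F(w) = \max_v \Phi(w,v)$. Along the trajectory of Algorithm~\ref{alg:subgrad-project} the affine constraint $s_t = \sum_j v_t^j$ is preserved by the sparse projection steps derived in Appendix~\ref{sec:appendix-projection}, and under this constraint the coordinate $i_t = \argmin_i s_{t,i}$ realizes the best response $w_t := e_{i_t}$ for the $w$-player on $\Delta_n$, since $\min_{w\in\Delta_n} \langle w, \tfrac1J\sum_j v_t^j \rangle = \tfrac{1}{J}\min_i s_{t,i}$.

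Next I would verify that the stochastic update computed by Algorithm~\ref{alg:subgrad-project} is an unbiased stochastic subgradient ascent step for the instantaneous payoff $v\mapsto \Phi(w_t, v)$, where $w_t$ is the current best response. Drawing $j\sim\mathrm{Unif}[J]$ and $x\sim\mu_j$, the sparse update (with entry $-1$ in the $v^j$-block at coordinate $i_W$ and entry $\tfrac{1}{J}$ in the $s$-block at coordinate $i_M$) has expectation equal to the subgradient of $\Phi(w_t, \cdot)$ at $v_t$ restricted to the subspace $\{s=\sum_j v^j\}$, and its $\ell_2$ norm is bounded by a small absolute constant.

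I would then apply the classical regret bound for stochastic projected subgradient ascent on a concave function. Under the assumption $\lVert u_t-u^*\rVert_1\le R$ and with step size $\gamma = R/(4\sqrt T)$, this gives
\begin{equation*}
\E\!\left[\max_v \frac{1}{T}\sum_{t=1}^T \Phi(w_t, v) \;-\; \frac{1}{T}\sum_{t=1}^T \Phi(w_t, v_t)\right] \;\le\; \frac{4R}{\sqrt{T}}.
\end{equation*}
Linearity of $\Phi$ in $w$ and the definition of $\overline w_T$ give $F(\overline w_T) = \max_v \Phi(\overline w_T, v) = \max_v \frac{1}{T}\sum_t \Phi(w_t, v)$. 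Because $w_t$ is the best response, $\Phi(w_t, v_t) = \min_w \Phi(w, v_t) \le F(w^*)$ by strong duality (the min-max interchange used to pass from~(\ref{eq:minmax}) to~(\ref{eq:maxmin})). Combining these two observations gives $\E[F(\overline w_T) - F(w^*)]\le 4R/\sqrt{T}$.

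The main technical point requiring care is confirming that the compact sparse update of Algorithm~\ref{alg:subgrad-project} really is both (i) the $\ell_1$-projection onto the affine subspace $\{s=\sum_j v^j\}$ and (ii) an unbiased stochastic subgradient of the restricted concave objective; once both are in place the theorem reduces to the Freund--Schapire combination above with a textbook regret bound. Verifying (i) is deferred to the appendix, while (ii) is a short conditional-expectation computation over the sampled $(j, x)$.
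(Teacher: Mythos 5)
Your proposal is correct and follows essentially the same route as the paper's proof: the Freund--Schapire reduction in which the $v$-player runs no-regret online (sub)gradient ascent while the $w$-player plays the best response $e_{i_t}$, combined with the sparse $\ell_1$-projection from Appendix~\ref{sec:appendix-projection}, the standard online gradient ascent regret bound, and minimax duality to convert average regret into the suboptimality of $\overline w_T$. The only cosmetic difference is that you streamline the paper's chain of minimax inequalities into the two observations $F(\overline w_T)=\max_v \frac1T\sum_t \Phi(w_t,v)$ and $\frac1T\sum_t\Phi(w_t,v_t)\le F(w^*)$, which is the same argument.
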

Theorem~\ref{thm:optimization} is proved in Appendix~\ref{sec:appendix-stochastic-gradient}. The proof combines the zero-sum game idea above, which itself comes from~\citep{freund_adaptive_1999}, with a regret bound for online gradient descent~\citep{zinkevich2003online,hazan_introduction_2016}.

\begin{wrapfigure}{R}{0.4\textwidth}
  \begin{minipage}[t]{\linewidth}
    \vspace{-.4cm}
    \begin{algorithm}[H]
      \caption{Master Thread}
      \label{alg:master-thread}
      \begin{algorithmic}
        \State {\bfseries Input:} index $j$, distribution $\mu$, atoms $\{y_i\}_{i=1,\dots,N}$, number $J$ of distributions, step size $\gamma$
        \State {\bfseries Output:} barycenter weights $w$
        \State $c \leftarrow 0_n$
        \State $s \leftarrow 0_n$
        \State $i_M \leftarrow 1$ 
        \Loop
          \State $i_W \leftarrow$ message from worker $j$
          \State Send $i_M$ to worker $j$
          \State $c_{i_M} \leftarrow c_{i_M} + 1$
          \State $s_{i_M} \leftarrow s_{i_M} + \gamma/(2J)$
          \State $s_{i_W} \leftarrow s_{i_W} - \gamma/2$
          \State $i_M \leftarrow \argmin_i s_i$
        \EndLoop
        \State\Return $w \leftarrow c / (\sum_{i=1}^n c_i)$
      \end{algorithmic}
    \end{algorithm}
  \end{minipage}
  \begin{minipage}[t]{\linewidth}
    \begin{algorithm}[H]
      \caption{Worker Thread}
      \label{alg:worker-thread}
      \begin{algorithmic}
        \State {\bfseries Input:} index $j$, distribution $\mu$, atoms $\{y_i\}_{i=1,\dots,N}$, number $J$ of distributions, step size $\gamma$
        \State $v \leftarrow 0_n$
        \Loop
          \State Draw $x \sim \mu$
          \State $i_W \leftarrow \argmin_i \{ c(x,y_i) - v_i \}$
          \State Send $i_W$ to master
          \State $i_M \leftarrow$ message from master
          \State $v_{i_M} \leftarrow v_{i_M} + \gamma/(2J)$
          \State $v_{i_W} \leftarrow v_{i_W} - \gamma/2$
        \EndLoop
      \end{algorithmic}
    \end{algorithm}
    \vspace{-.6cm}
  \end{minipage}
\end{wrapfigure}

\subsection{Parallel Implementation}
The key realization which makes our barycenter algorithm truly scalable is that the variables $s,v^1,\dots,v^J$ can be separated across different machines. In particular, the ``sum'' or ``coupling'' variable $s$ is maintained on a master thread which runs Algorithm~\ref{alg:master-thread}, and each $v^j$ is maintained on a worker thread running Algorithm~\ref{alg:worker-thread}. Each projected gradient step requires first selecting distribution $j$. The algorithm then requires computing only $i_W = \argmin_i \{c(x_j,y_i) - v^j_i\}$ and $i_M = \argmin_i s_i$, and then updating $s$ and $v^j$ in only those coordinates. Hence only a small amount of information ($i_W$ and $i_M$) need pass between threads. 

Note also that this algorithm can be adapted to the parallel shared-memory case, where $s$ is a variable shared between threads which make sparse updates to it. Here we will focus on the first master/worker scenario for simplicity.

Where are the bottlenecks? When there are $n$ points in the discrete approximation, each worker's task of computing $\argmin_i \{c(x_j,y_i) - v^j_i\}$ requires $O(n)$ computations of $c(x,y)$. The master must iteratively find the minimum element $s_{i_M}$ in the vector $s$, then update $s_{i_M}$, and decrease element $s_{i_W}$. These can be implemented respectively as the ``find min'', ``delete min'' then ``insert,'' and ``decrease min'' operations in a Fibonacci heap. All these operations together take amortized $O(\log n)$ time. Hence, it takes $O(n)$ time it for all $J$ workers to each produce one gradient sample in parallel, and only $O(J\log n)$ time for the master to process them all. Of course, communication is not free, but the messages are small and our approach should scale well for $J \ll n$.

This parallel algorithm is particularly well-suited to the Wasserstein posterior (WASP)~\citep{srivastava_scalable_2015} framework for merging Bayesian subset posteriors. In this setting, we split the dataset $X_1,\dots,X_k$ into $J$ subsets $S_1,\dots,S_{J}$ each with $k/J$ data points, distribute those subsets to $J$ different machines, then each machine runs Markov Chain Monte Carlo (MCMC) to sample from $p(\theta|S_i)$, and we aggregate these posteriors via their barycenter. The most expensive subroutine in the worker thread is actually sampling from the posterior, and everything else is cheap in comparison. In particular, the machines need not even share samples from their respective MCMC chains.

One subtlety is that selecting worker $j$ truly uniformly at random each iteration requires more synchronization, hence our gradient estimates are not actually independent as usual. Selecting worker threads as they are available will fail to yield a uniform distribution over $j$, as at the moment worker $j$ finishes one gradient step, the probability that worker $j$ is the next available is much less than $1/J$: worker $j$ must resample and recompute $i_W$, whereas other threads would have a head start. If workers all took precisely the same amount of time, the ordering of worker threads would be determinstic, and guarantees for without-replacement sampling variants of stochastic gradient ascent would apply~\citep{shamir_without-replacement_2016}. In practice, we have no issues with our approach.



\section{Consistency}
\label{sec:consistency}
Prior methods for estimating the Wasserstein barycenter $\nu^*$ of continuous measures $\mu_j \in \Ps(\Xs)$ involve first approximating each $\mu_j$ by a measure $\mu_{j,n}$ that has finite support on $n$ points, then computing the barycenter $\nu^*_n$ of $\{\mu_{j,n}\}$ as a surrogate for $\nu^*$. This approach is consistent, in that if $\mu_{j,n} \to \mu_j$ as $n\to\infty$, then also $\nu^*_n \to \nu^*$. This holds even if the barycenter is not unique, both in the Euclidean case~\citep[Theorem 3.1]{boissard_distributions_2015} as well as when $\Xs$ is a Riemannian manifold~\citep[Theorem 5.4]{kim_wasserstein_2017}. However, it is not known how fast the approximation $\nu^*_n$ approaches the true barycenter $\nu^*$, or even how fast the barycentric distance $F[\nu^*_n]$ approaches $F[\nu_n]$.

In practice, not even the approximation $\nu^*_n$ is computed exactly: instead, support points are chosen and $\nu^*_n$ is constrained to have support on those points. There are various heuristic methods for choosing these support points, ranging from mesh grids of the support, to randomly sampling points from the convex hull of the supports of $\mu_j$ \matt{is there a more precise name for this?}, or even optimizing over the support point locations. Yet we are unaware of any rigorous guarantees on the quality of these approximations.

While our approach still involves approximating the barycenter $\nu^*$ by a measure $\nu^*_n$ with fixed support, we are able to provide bounds on the quality of this approximation as $n \to \infty$. Specifically, we bound the rate at which $F[\nu^*_n] \to F[\nu_n]$. The result is intuitive, and appeals to the notion of an $\epsilon$-cover of the support of the barycenter:
\begin{definition}[Covering Number]
The \emph{$\epsilon$-covering number} of a compact set $K \subset \Xs$, with respect to the metric $g$, is the minimum number $\Ns_\epsilon(K)$ of points $\{x_i\}_{i=1}^{\Ns_\epsilon(K)} \in K$ needed so that for each $y \in K$, there is some $x_i$ with $g(x_i,y) \leq \epsilon$. The set $\{x_i\}$ is called an $\epsilon$-covering.
\end{definition}

\begin{definition}[Inverse Covering Radius]
Fix $n \in \mathbb Z^+$. We define the \emph{$n$-inverse covering radius} of compact $K \subset \Xs$ as the value $\epsilon_n(K) = \inf \{\epsilon > 0 : \Ns_\epsilon(K) \leq n\}$, when $n$ is large enough so the infimum exists.
\end{definition}
Suppose throughout this section that $K \subset \reals^d$ is endowed with a Riemannian metric $g$, where $K$ has diameter $D$.
In the specific case where $g$ is the usual Euclidean metric, there is an $\epsilon$-cover for $K$ with at most $C_1\epsilon^{-d}$ points, where $C_1$ depends only on the diameter $D$ and dimension $d$~\citep{shalev-shwartz_understanding_2014}. Reversing the inequality, $K$ has an $n$-inverse covering radius of at most $\epsilon \leq C_2 n^{-1/d}$ when $n$ takes the correct form.

\matt{reminder that we need to double check this}
We now present and then prove our main result:
\begin{theorem}
\label{thm:j-convergence}
  Suppose the measures $\mu_j$ are supported on $K$, and suppose $\mu_1$ is absolutely continuous with respect to volume. Then the barycenter $\nu^*$ is unique. Moreover, for each empirical approximation size $n$, if we choose support points $\{y_i\}_{i=1,\dots,n}$ which constitute a $2\epsilon_n(K)$-cover of $K$, it follows that $F[\nu_n^*] - F[\nu^*] \leq O(\epsilon_n(K) + n^{-1/d})$, where $\nu_n^* = \sum_{i=1}^n w^*_i \delta_{y_i}$ for $w^*$ solving Problem~\eqref{eq:minmax}. 
\end{theorem}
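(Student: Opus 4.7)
The plan is to upper-bound $F[\nu_n^*]$ by exhibiting an explicit discrete competitor supported on the cover $\{y_i\}$ and invoking optimality of $\nu_n^*$. Uniqueness of $\nu^*$ under the hypothesis that $\mu_1$ is absolutely continuous with respect to volume matches exactly the standard hypothesis in the Riemannian barycenter literature (Kim--Pass, extending the Euclidean result of Agueh--Carlier), so I would invoke that result directly rather than reproving it.

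For the quantitative bound, define the nearest-neighbor projection $T: K \to \{y_1,\dots,y_n\}$ by $T(x) \in \argmin_i d(x,y_i)$, and set $\tilde\nu_n := T_\# \nu^*$. Because $\{y_i\}$ is a $2\epsilon_n(K)$-cover of $K$ and $\nu^*$ is supported in $K$, the coupling $(\mathrm{id}, T)_\# \nu^*$ witnesses
\[
W_2^2(\nu^*, \tilde\nu_n) \;\leq\; \int_K d(x, T(x))^2 \, d\nu^*(x) \;\leq\; 4\,\epsilon_n(K)^2,
\]
so $W_2(\nu^*, \tilde\nu_n) \leq 2\,\epsilon_n(K)$. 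Combining the triangle inequality for $W_2$ with the diameter bound $W_2(\mu_j,\nu^*) \leq D$ yields
\[
W_2^2(\mu_j, \tilde\nu_n) \;\leq\; \bigl(W_2(\mu_j,\nu^*) + 2\,\epsilon_n(K)\bigr)^2 \;\leq\; W_2^2(\mu_j,\nu^*) + 4D\,\epsilon_n(K) + 4\,\epsilon_n(K)^2.
\]
Averaging over $j$ gives $F[\tilde\nu_n] - F[\nu^*] = O(\epsilon_n(K))$. Since $\tilde\nu_n$ is supported on a subset of $\{y_i\}$, it is feasible for the semi-discrete problem defining $\nu_n^*$, so optimality of $\nu_n^*$ gives $F[\nu_n^*] \leq F[\tilde\nu_n]$ and the bound follows. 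The additional $n^{-1/d}$ in the statement then enters purely through the Euclidean/Riemannian cover estimate $\epsilon_n(K) \leq C n^{-1/d}$ noted earlier in the section, which only holds for $n$ of a suitable form; writing the bound as $O(\epsilon_n(K) + n^{-1/d})$ absorbs this technicality.

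The main obstacle I anticipate is showing that $\nu^*$ is actually supported in $K$ so that the cover bound applies pointwise $\nu^*$-a.e. Under the Riemannian setup of the section (with $K$ geodesically convex and the $\mu_j$ supported in $K$), this should follow from standard barycenter theory, or by a direct argument that mass of any candidate $\nu$ lying outside $K$ can be projected onto $K$ while strictly decreasing $W_2^2(\mu_j, \nu)$ for every $j$ simultaneously. A minor technical point is the Borel measurability of the nearest-neighbor map $T$, which is handled by taking a measurable selection from the power-diagram partition of $K$ induced by $\{y_i\}$. The remaining triangle-inequality and optimality steps are routine.
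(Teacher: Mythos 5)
Your proof is correct, but it takes a genuinely different and in fact more direct route than the paper. The paper's proof is a two-step approximation: it first invokes Kloeckner's quantization theorem to produce an optimal \emph{uniform-mass} $N$-point approximation $\nu_N$ of $\nu^*$ with $W_2(\nu_N,\nu^*) \leq C N^{-1/d}$ (this is where absolute continuity of $\nu^*$, obtained from Theorem 5.1 of Kim--Pass, is used), then snaps the support of $\nu_N$ onto the $2\epsilon_n(K)$-cover at additional cost $2\epsilon_n(K)$, sets $N=n$, and applies the Lipschitz estimate $\lvert F[\eta]-F[\eta']\rvert \leq 2D\,W_2(\eta,\eta')$ (stated as a separate lemma) before using optimality of $w^*$. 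Your single-step nearest-neighbor pushforward $T_\#\nu^*$ skips the intermediate uniform quantization entirely: since the semi-discrete problem optimizes over arbitrary weights on the $y_i$, there is no need to pass through a measure with equal atom masses, and the coupling $(\mathrm{id},T)_\#\nu^*$ immediately gives $W_2(\nu^*, T_\#\nu^*) \leq 2\epsilon_n(K)$. This buys you two things: a strictly sharper conclusion, $F[\nu_n^*]-F[\nu^*] \leq O(\epsilon_n(K))$ with no additive $n^{-1/d}$ term (which of course implies the stated bound), and no reliance on absolute continuity of $\nu^*$ for the quantitative part --- it is needed only for uniqueness. This is consistent with the paper's own remark that the $N$-point quantization step is the sole reason absolute continuity enters the rate. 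Your flagged technical points (that $\nu^*$ is supported in $K$, and measurability of $T$ via a Voronoi/power-diagram selection) are real but routine, and the paper's proof implicitly needs the former as well; the paper's closing discussion that a cover of the convex hull of the supports suffices is the same observation. In short: correct, and arguably an improvement on the published argument.
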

\begin{proof}
  For any two measures $\eta, \eta'$ supported on $K$, we can bound $W_2(\eta, \eta') \leq D$: the worst-case $\eta, \eta'$ are point masses distance $D$ apart, so that the transport plan sends all the mass a distance of $D$.

  It follows that $\lvert W_2(\mu, \nu_n) + W_2(\mu, \nu) \rvert \leq 2D$ and therefore
  \begin{align}
    \lvert W_2^2(\mu, \nu_n) - W_2^2(\mu, \nu) \rvert
    &\leq 2D \cdot \lvert W_2(\mu, \nu_n) - W_2(\mu, \nu) \rvert \\
    &\leq 2D \cdot W_2(\nu_n, \nu)
  \end{align}
  by the triangle inequality.
  Summing over all $\mu = \mu_j$, we find that
  \begin{align}
    \lvert F[\nu_n] - F[\nu] \rvert
    &\leq \frac1J \sum_{j=1}^J \lvert W_2^2(\mu_j, \nu_n) - W_2^2(\mu_j, \nu) \rvert \\
    &\leq \frac1J \sum_{j=1}^J 2D \cdot W_2(\nu_n, \nu) = 2D \cdot W_2(\nu_n,\nu),
  \end{align}
  completing the proof.
\end{proof}

\begin{remark}
Absolute continuity is only needed to reason about approximating the barycenter with an $N$ point discrete distribution. If the input distributions are themselves discrete distributions, so is the barycenter, and we can strengthen our result. For large enough $n$, we actually have $W_2(\nu^*_n,\nu^*) \leq 2\epsilon_n(K)$ and therefore $F[\nu^*_n]-F[\nu^*] \leq O(\epsilon_n(K))$.
\end{remark}

\begin{corollary}[Convergence to $\nu^*$]
  \label{corollary:convergence}
  Suppose the measures $\mu_j$ are supported on $K$, with $\mu_1$ absolutely continuous with respect to volume. Let $\nu^*$ be the unique minimizer of $F$. Then we can choose support points $\{y_i\}_{i=1,\dots,n}$ such that some subsequence of $\nu_n^* = \sum_{i=1}^n w^*_i \delta_{y_i}$ converges weakly to $\nu^*$.
\end{corollary}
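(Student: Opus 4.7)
The plan is to chain together Theorem~\ref{thm:j-convergence} (quantitative convergence of the cost values) with the compactness of $\Ps(K)$ in the weak topology and the continuity of $F$ under weak limits. First, for each $n$ pick the support $\{y_i\}_{i=1}^n$ to be a $2\epsilon_n(K)$-cover of $K$; such a cover exists by the definition of the inverse covering radius, and since $K \subset \reals^d$ is compact we have $\epsilon_n(K) \to 0$ (indeed $\epsilon_n(K) \leq C_2 n^{-1/d}$ as recalled before the theorem). Applying Theorem~\ref{thm:j-convergence} with this choice yields
\begin{equation}
  0 \leq F[\nu_n^*] - F[\nu^*] \leq O\!\left(\epsilon_n(K) + n^{-1/d}\right) \xrightarrow[n\to\infty]{} 0.
\end{equation}

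Next, since $K$ is compact, the set $\Ps(K)$ is tight, hence weakly sequentially compact by Prokhorov's theorem. Therefore the sequence $\{\nu_n^*\}$ admits a subsequence $\{\nu_{n_k}^*\}$ converging weakly to some $\tilde\nu \in \Ps(K)$. Because $K$ is a compact subset of a Riemannian manifold and in particular has bounded diameter, the $2$-Wasserstein metric $W_2$ metrizes weak convergence on $\Ps(K)$ (see Villani, Theorem~6.9), so $W_2(\nu_{n_k}^*, \tilde\nu) \to 0$. For each fixed $j$, the triangle inequality for $W_2$ gives
\begin{equation}
  \lvert W_2(\mu_j, \nu_{n_k}^*) - W_2(\mu_j, \tilde\nu) \rvert \leq W_2(\nu_{n_k}^*, \tilde\nu) \to 0,
\end{equation}
and squaring (using boundedness of $W_2(\mu_j,\cdot)$ on $\Ps(K)$ by $D$) shows $W_2^2(\mu_j, \nu_{n_k}^*) \to W_2^2(\mu_j, \tilde\nu)$. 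Averaging over $j$ yields $F[\nu_{n_k}^*] \to F[\tilde\nu]$.

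Combining the two limits, $F[\tilde\nu] = F[\nu^*]$. Since Theorem~\ref{thm:j-convergence} also asserts that $\nu^*$ is the \emph{unique} minimizer of $F$ (using the absolute continuity assumption on $\mu_1$), we conclude $\tilde\nu = \nu^*$, i.e.\ $\nu_{n_k}^* \rightharpoonup \nu^*$ weakly, which is the claim.

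The only genuinely delicate step is justifying that $F[\nu_{n_k}^*] \to F[\tilde\nu]$, i.e.\ that $\nu \mapsto W_2^2(\mu_j, \nu)$ is continuous with respect to weak convergence on $\Ps(K)$; this is where compactness of $K$ is essential, as it both gives tightness needed for Prokhorov and ensures that $W_2$ metrizes weak convergence (uniform integrability of the cost is automatic). Everything else is a direct application of Theorem~\ref{thm:j-convergence} together with standard soft arguments.
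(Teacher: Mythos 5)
Your proposal is correct and follows essentially the same route as the paper's proof: apply Theorem~\ref{thm:j-convergence} to get $F[\nu_n^*]\to F[\nu^*]$, extract a weakly convergent subsequence by compactness of $\Ps(K)$, pass to the limit using continuity of $F$ (which you justify more explicitly via metrization of weak convergence by $W_2$ on compact sets together with the Lipschitz bound of Lemma~\ref{lem:j-convergence}), and identify the limit with $\nu^*$ by uniqueness of the minimizer. The only cosmetic difference is that the paper invokes strict convexity of $F$ at the last step where you invoke uniqueness directly; these are equivalent here.
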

\begin{proof}
  By Theorem~\ref{thm:j-convergence}, we can choose support points so that $F[\nu_n^*] \to F[\nu^*]$. By compactness, the sequence $\nu_n^*$ admits a convergent subsequence $\nu_{n_k}^* \to \nu$ for some measure $\nu$. Continuity of $F$ allows us to pass to the limit $\lim_{k \to \infty} F[\nu_{n_k}^*] = F[\lim_{k \to \infty} \nu_{n_k}^*]$. On the other hand, $\lim_{k \to \infty} F[\nu_{n_k}^*] = F[\nu^*]$, and $F$ is strictly convex~\cite{kim_wasserstein_2017}, thus $\nu_{n_k}^* \to \nu^*$ weakly.
\end{proof}

Before proving Theorem~\ref{thm:j-convergence}, we need smoothness of the barycenter functional $F$ with respect to Wasserstein-2 distance:
\begin{lemma}
\label{lem:j-convergence}
  Suppose we are given measures $\{\mu_j\}_{j=1}^J$, $\nu$, and $\{\nu_n\}_{n=1}^\infty$ supported on $K$, with $\nu_n \to \nu$. Then, $F[\nu_n] \to F[\nu]$, with $\lvert F[\nu_n] - F[\nu] \rvert \leq 2D \cdot W_2(\nu_n,\nu)$.
\end{lemma}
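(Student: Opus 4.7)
The plan is to reduce the claim to a pointwise (per-input-measure) estimate of the form $|W_2^2(\mu_j,\nu_n)-W_2^2(\mu_j,\nu)| \le 2D\cdot W_2(\nu_n,\nu)$, and then sum/average over $j$. The key algebraic trick is to factor the difference of squares as $a^2-b^2=(a-b)(a+b)$, so that one factor can be controlled by the diameter bound and the other by the triangle inequality for $W_2$.

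First I would record the basic diameter fact: any two probability measures supported on the compact set $K$ of diameter $D$ satisfy $W_2(\eta,\eta')\le D$, since the product coupling $\eta\otimes\eta'$ has cost at most $D^2$ (all mass travels at most distance $D$). Applied with $\eta=\mu_j$ and $\eta'\in\{\nu_n,\nu\}$, this gives the uniform bound
\begin{equation*}
W_2(\mu_j,\nu_n) + W_2(\mu_j,\nu) \le 2D.
\end{equation*}

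Next I would combine this with the factorization $|W_2^2(\mu_j,\nu_n)-W_2^2(\mu_j,\nu)| = |W_2(\mu_j,\nu_n)-W_2(\mu_j,\nu)|\cdot (W_2(\mu_j,\nu_n)+W_2(\mu_j,\nu))$ to get
\begin{equation*}
|W_2^2(\mu_j,\nu_n)-W_2^2(\mu_j,\nu)| \le 2D\cdot |W_2(\mu_j,\nu_n)-W_2(\mu_j,\nu)| \le 2D\cdot W_2(\nu_n,\nu),
\end{equation*}
where the last step is the reverse triangle inequality for the $W_2$ metric on $\Ps(K)$. Averaging this estimate over $j=1,\dots,J$ collapses the right-hand side (it does not depend on $j$) and yields $|F[\nu_n]-F[\nu]|\le 2D\cdot W_2(\nu_n,\nu)$. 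Convergence $F[\nu_n]\to F[\nu]$ then follows immediately from $W_2(\nu_n,\nu)\to 0$, which is precisely what $\nu_n\to\nu$ means in the Wasserstein sense (and is equivalent to weak convergence on the bounded set $K$).

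There is no real obstacle in this argument; the only subtlety is being clear about which notion of convergence $\nu_n\to\nu$ denotes. If one wants the hypothesis to be weak convergence, one needs to invoke the equivalence of weak convergence and $W_2$-convergence on the compact domain $K$ (since all moments are automatically bounded). Otherwise the inequality $|F[\nu_n]-F[\nu]|\le 2D\cdot W_2(\nu_n,\nu)$ is the main content, and the rest is just taking $n\to\infty$.
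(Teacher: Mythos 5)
Your proposal is correct and follows essentially the same route as the paper: bound each $W_2(\mu_j,\cdot)$ by the diameter $D$, factor the difference of squares, apply the (reverse) triangle inequality to get the per-$j$ bound $2D\cdot W_2(\nu_n,\nu)$, and average over $j$. Your justification of the diameter bound via the product coupling is in fact slightly cleaner than the paper's informal ``worst-case point masses'' remark, but the argument is the same.
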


\begin{proof}[Proof of Theorem~\ref{thm:j-convergence}]
  Uniqueness of $\nu^*$ follows from Theorem 2.4 of~\citep{kim_wasserstein_2017}.
  From Theorem 5.1 in~\citep{kim_wasserstein_2017} we know further that $\nu^*$ is absolutely continuous with respect to volume.

  Let $N > 0$, and let $\nu_N$ be the discrete distribution on $N$ points, each with mass $1/N$, which minimizes $W_2(\nu_N, \nu^*)$. This distribution satisfies $W_2(\nu_N, \nu^*) \leq C N^{-1/d}$~\citep{kloeckner_approximation_2012}, where $C$ depends on $K$, the dimension $d$, and the metric. With our ``budget'' of $n$ support points, we can construct a $2\epsilon_n(K)$-cover as long as $n$ is sufficiently large. Then define a distribution $\nu_{n,N}$ with support on the $2\epsilon_n(K)$-cover as follows: for each $x$ in the support of $\nu_N$, map $x$ to the closest point $x'$ in the cover, and add mass $1/N$ to $x'$. Note that this defines not only the distribution $\nu_{n,N}$, but also a transport plan between $\nu_N$ and $\nu_{n,N}$. This map moves $N$ points of mass $1/N$ each a distance at most $2\epsilon_n(K)$, so we may bound $W_2(\nu_{n,N}, \nu_N) \leq \sqrt{N \cdot 1/N \cdot (2\epsilon_n(K))^2} = 2\epsilon_n(K)$. Combining these two bounds, we see that
  \begin{align}
    W_2(\nu_{n,N}, \nu^*) &\leq W_2(\nu_{n,N}, \nu_N) + W_2(\nu_N, \nu^*) \\
    &\leq 2\epsilon_n(K) + C N^{-1/d}.
  \end{align}

  For each $n$, we choose to set $N = n$, which yields $W_2(\nu_{n,n}, \nu^*) \leq 2\epsilon_n(K) + C n^{-1/d}$. Applying Lemma~\ref{lem:j-convergence}, and recalling that $\nu^*$ is the minimizer of $J$, we have
  \begin{equation}
    F[\nu_{n,n}] - F[\nu^*] \leq 2D \cdot (2\epsilon_n(K) + C n^{-1/d}) = O(\epsilon_n(K) + n^{-1/d}).
  \end{equation}
  However, we must have $F[\nu_n^*] \leq F[\nu_{n,n}]$, because both are measures on the same $n$ point $2\epsilon_n(K)$-cover, but $\nu_n^*$ has weights chosen to minimize $J$. Thus we must also have
  \begin{equation*}
    F[\nu_n^*] - F[\nu^*] \leq F[\nu_{n,n}] - F[\nu^*] \leq O(\epsilon_n(K) + n^{-1/d}). \qedhere 
  \end{equation*}
\end{proof}

The high-level view of the above result is that choosing support points $y_i$ to form an $\epsilon$-cover with respect to the metric $g$, and then optimizing over their weights $w_i$ via our stochastic algorithm, will give us a consistent picture of the behavior of the true barycenter. Also note that the proof above requires an $\epsilon$-cover only of the support of $v^*$, not all of $K$. In particular, an $\epsilon$-cover of the convex hull of the supports of $\mu_j$ is sufficient, as this must contain the barycenter. Other heuristic techniques to efficiently focus a limited budget of $n$ points only on the support of $\nu^*$ are advantageous and justified.

While Theorem~\ref{thm:j-convergence} is a good start, ideally we would also be able to provide a bound on $W_2(\nu^*_n, \nu^*)$. This would follow readily from sharpness of the functional $F[\nu]$, or even the discrete version $F(w)$, but it is not immediately clear how to achieve such a result.





\section{Experiments}
We demonstrate the applicability of our method on two experiments, one synthetic and one performing a real inference task.
Together, these showcase the positive traits of our algorithm: speed, parallelization, robustness to non-stationarity, applicability to non-Euclidean domains, and immediate performance benefit to Bayesian inference. We implemented our algorithm in C++ using MPI, and our code is posted at \url{github.com/mstaib/stochastic-barycenter-code}. Full experiment details are given in Appendix~\ref{sec:experiment-details}.

\subsection{Von Mises-Fisher Distributions with Drift}
We demonstrate computation and tracking of the barycenter of four drifting von Mises-Fisher distributions on the unit sphere $S^2$. Note that $W_2$ and the barycentric cost are now defined with respect to geodesic distance on $S^2$. 

The distributions are randomly centered, and we move the center of each distribution $3 \times 10^{-5}$ radians (in the same direction for all distributions) each time a sample is drawn. A snapshot of the results is shown in Figure~\ref{fig:vmf-sphere}. Our algorithm is clearly able to track the barycenter as the distributions move. 

\begin{figure}[t]
  \centering
  \begin{tabular}{@{}cccccc@{}}
    \includegraphics[width=.14\textwidth]{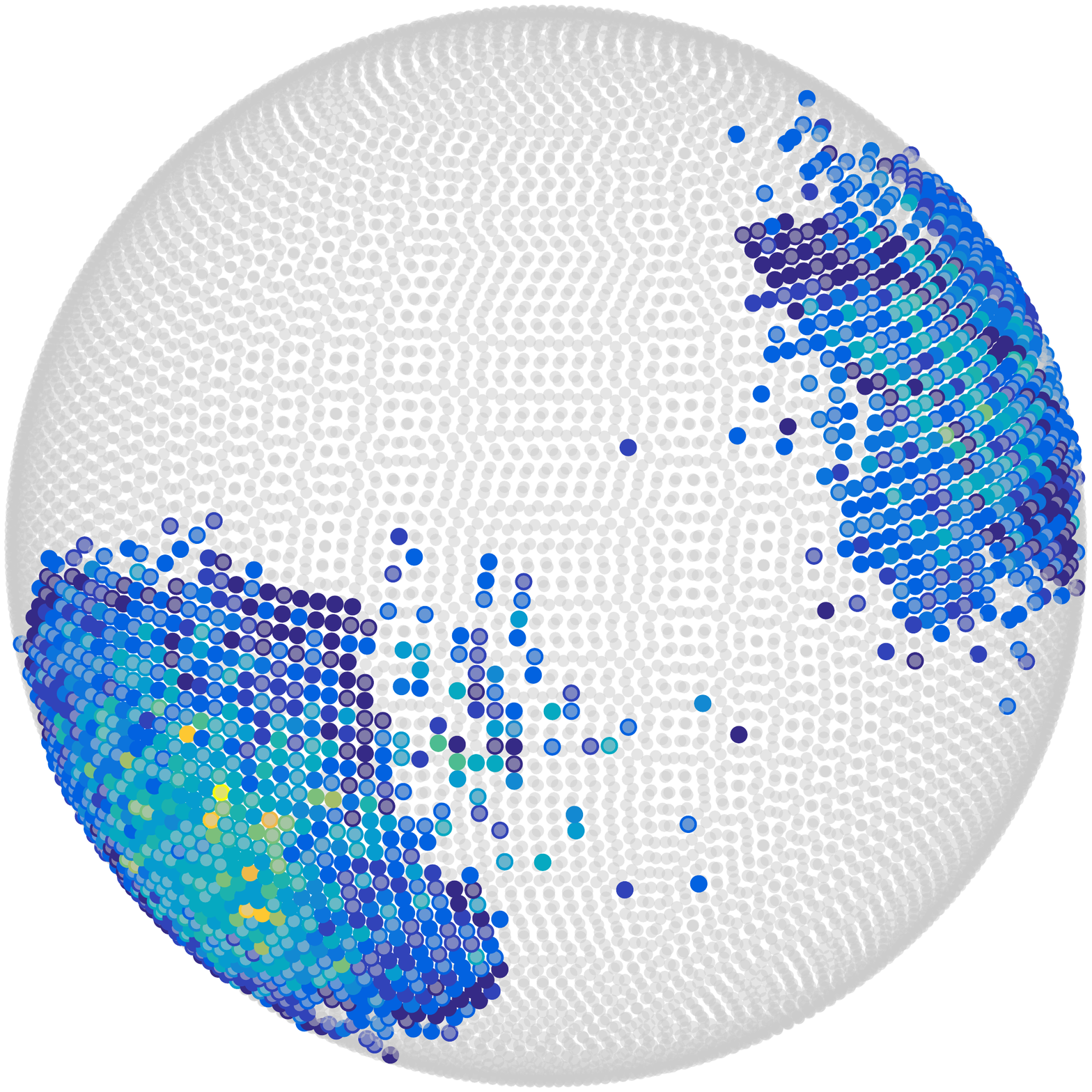} &
    \includegraphics[width=.14\textwidth]{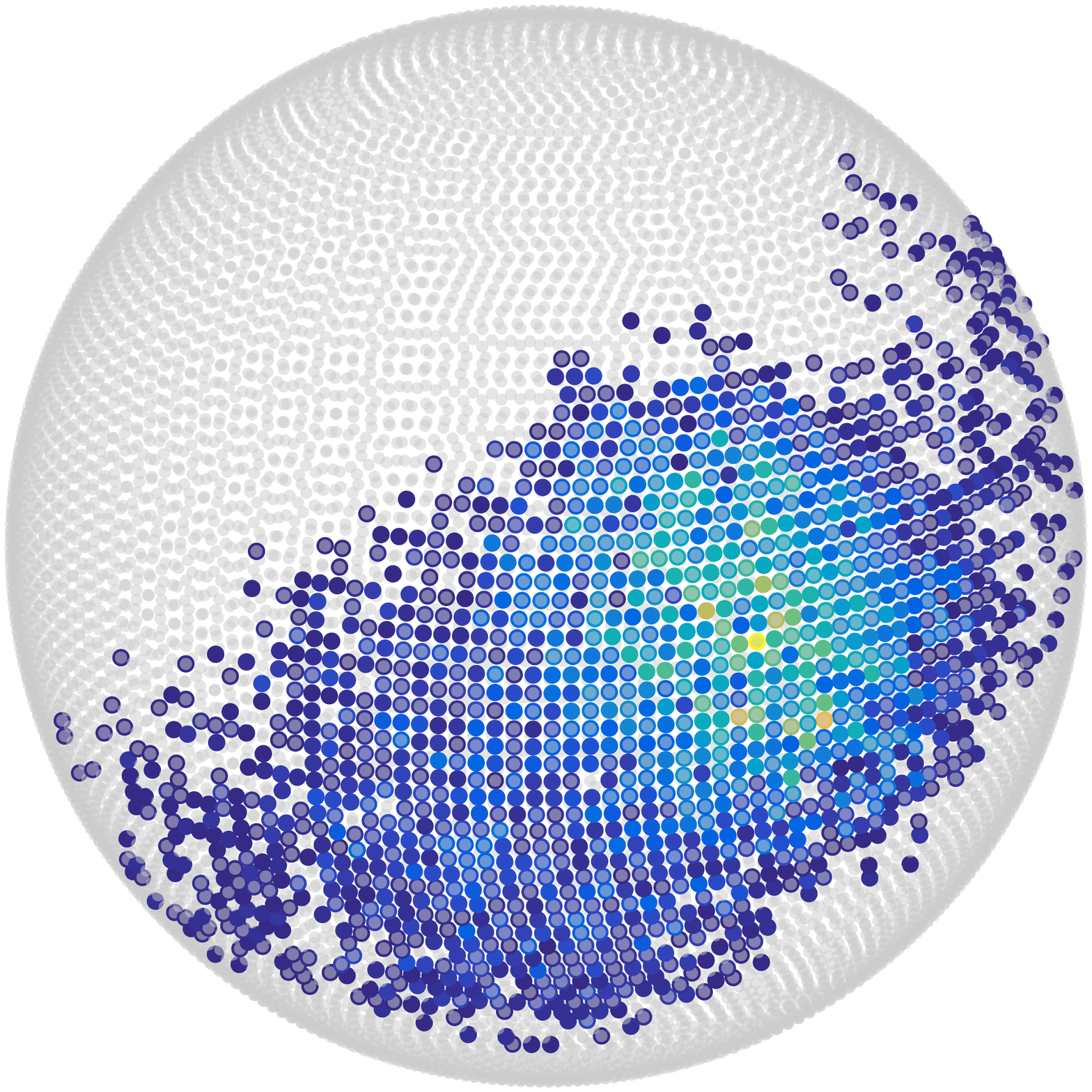} &
    \includegraphics[width=.14\textwidth]{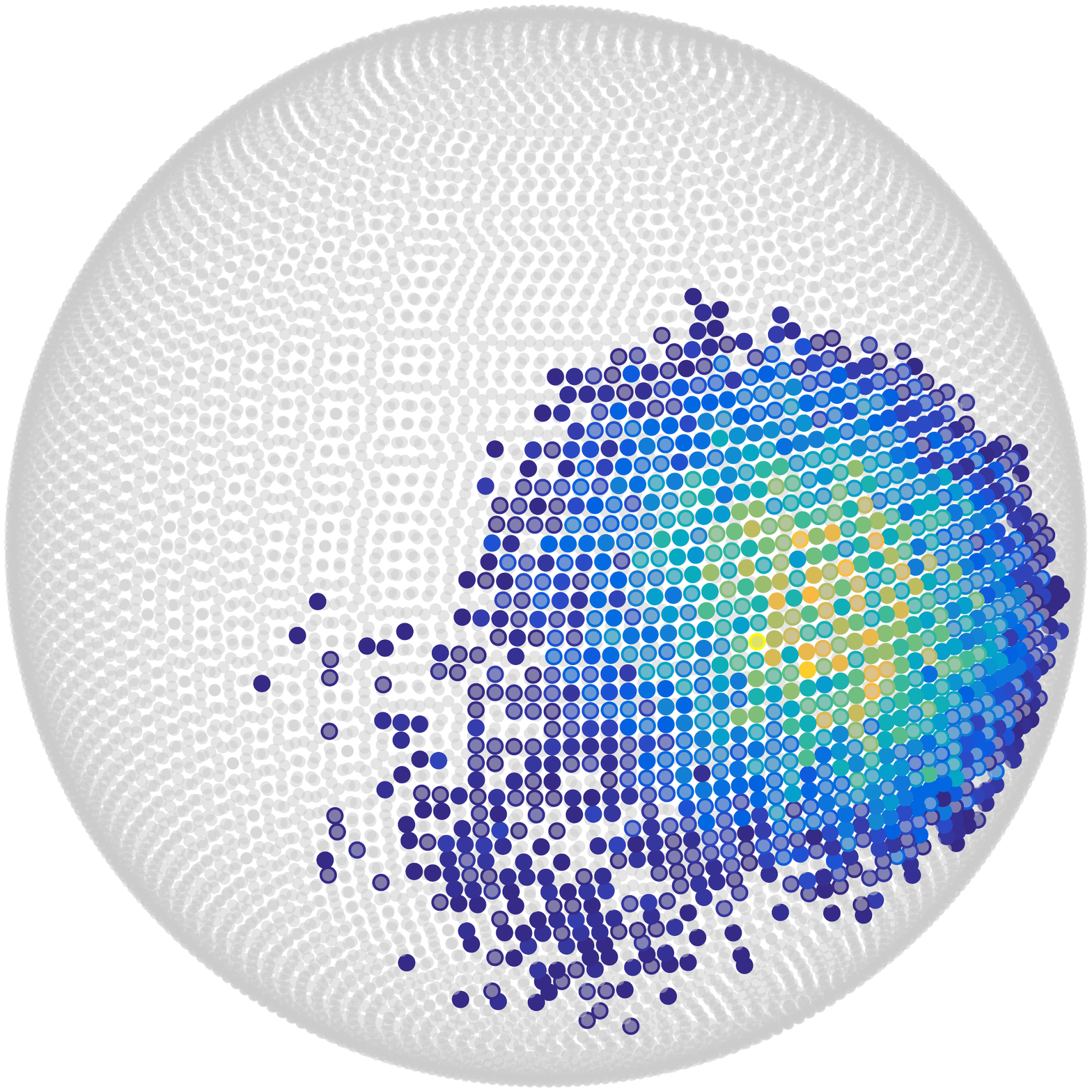} &
    \includegraphics[width=.14\textwidth]{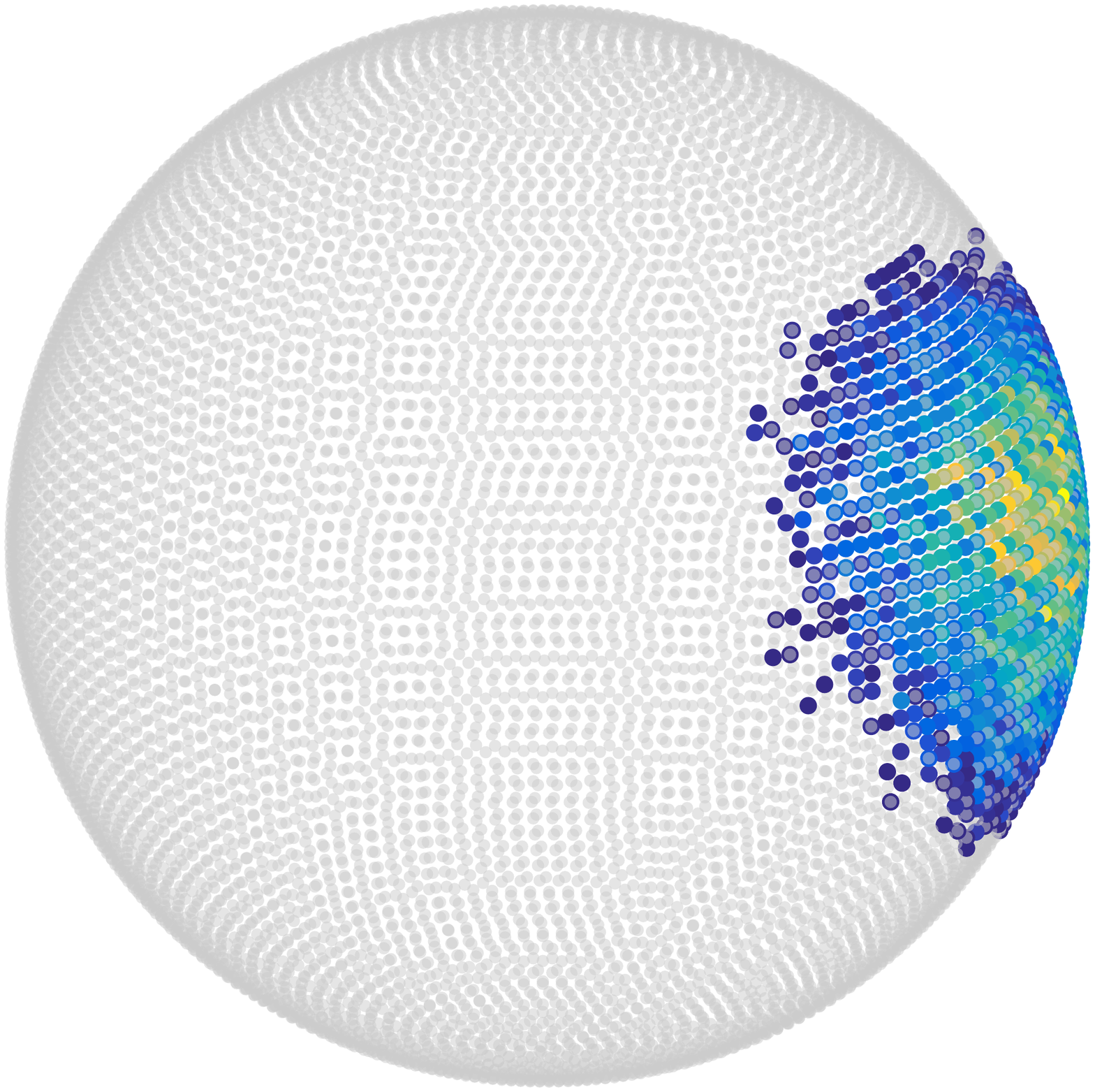} &
    \includegraphics[width=.14\textwidth]{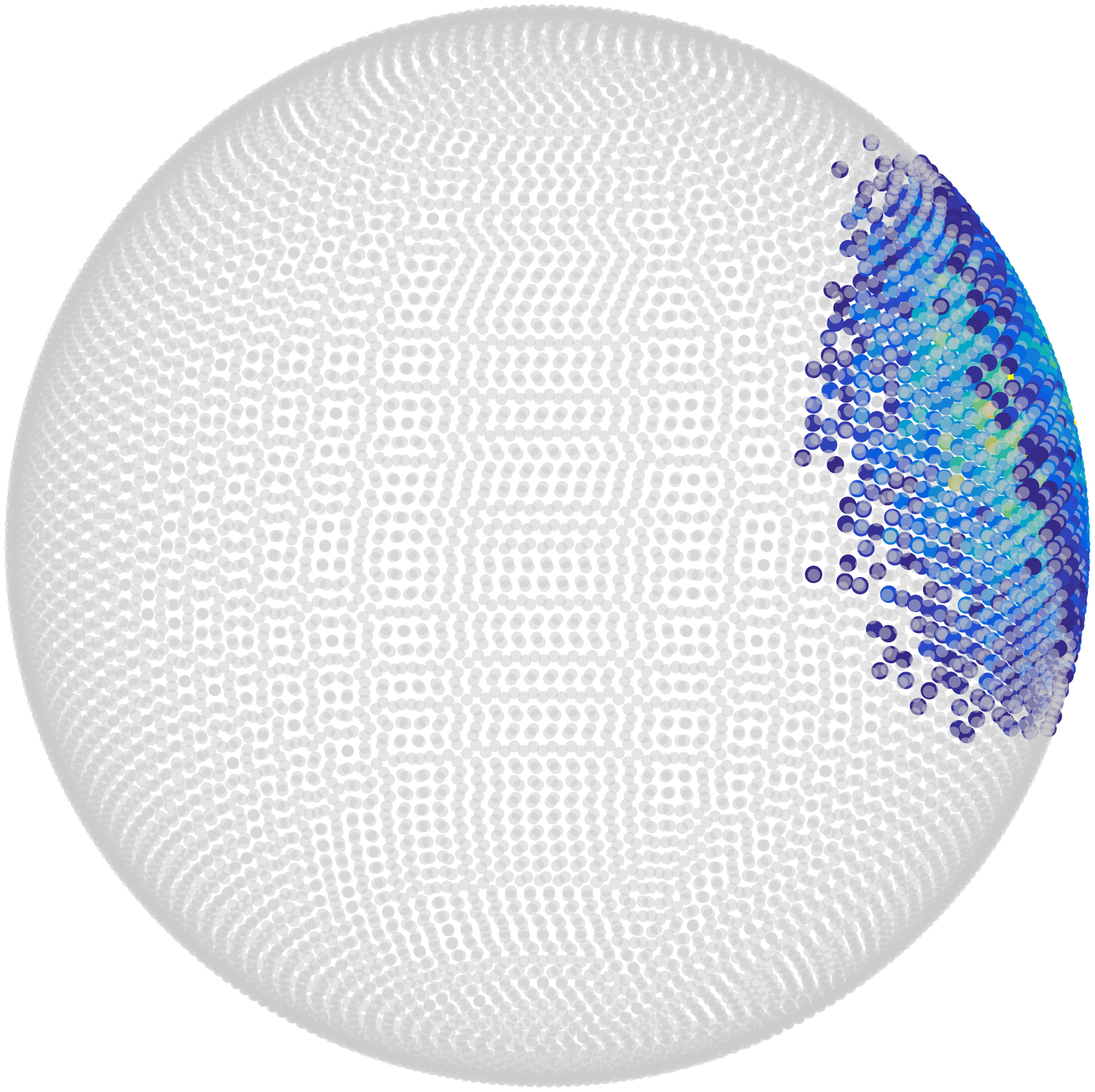} &
    \includegraphics[width=.14\textwidth]{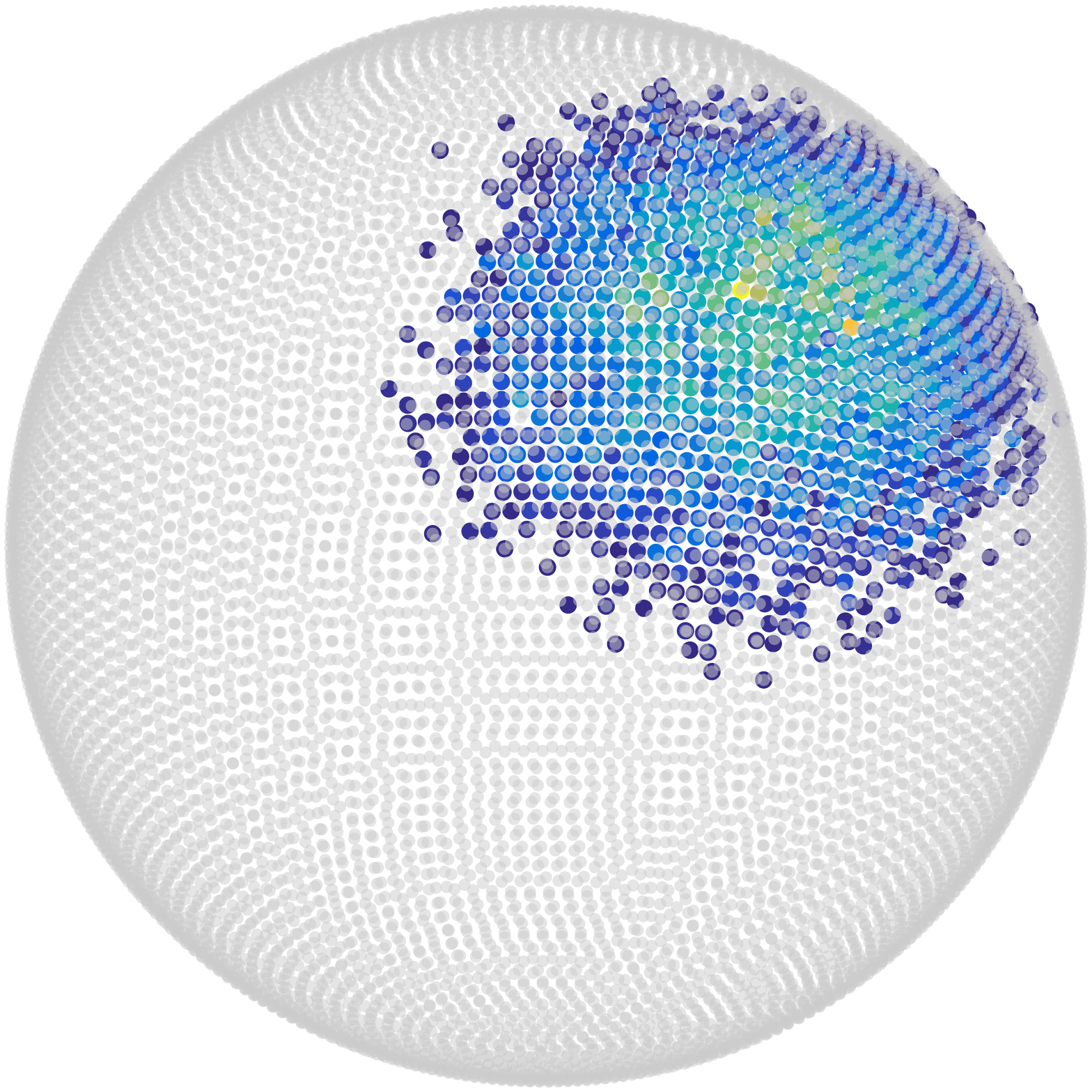}
  \end{tabular}
  \caption{The Wasserstein barycenter of four von Mises-Fisher distributions on the unit sphere $S^2$. From left to right, the figures show the initial distributions merging into the Wasserstein barycenter. As the input distributions are moved along parallel paths on the sphere, the barycenter accurately tracks the new locations as shown in the final three figures.}
  \label{fig:vmf-sphere}
\end{figure}

\begin{figure}
  \centering
  \includegraphics[width=0.5\textwidth]{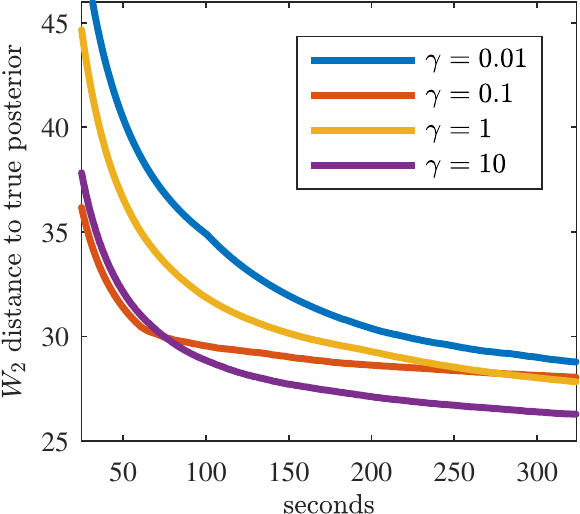}
  \caption{Convergence of our algorithm with $n\approx10^4$ for different stepsizes. In each case we recover a better approximation than what was possible with the LP for any $n$, in as little as $\approx 30$ seconds.}
  \label{fig:stepsize-compare}
\end{figure}

\subsection{Large Scale Bayesian Inference}
We run logistic regression on the UCI skin segmentation dataset~\cite{skin}. The 245057 datapoints are colors represented in $\reals^3$, each with a binary label determing whether that color is a skin color. We split consecutive blocks of the dataset into 127 subsets, and due to locality in the dataset, the data in each subsets is \emph{not} identically distributed. 
Each subset is assigned one thread of an InfiniBand cluster on which we simultaneously sample from the subset posterior via MCMC and optimize the barycenter estimate. This is in contrast to ~\citep{srivastava_wasp:_2015}, where the barycenter can be computed via a linear program (LP) only after all samplers are run.

Since the full dataset is tractable, we can compare the two methods via $W_2$ distance to the posterior of the full dataset, which we can estimate via the large-scale optimal transport algorithm in~\citep{genevay_stochastic_2016} or by LP depending on the support size. 
For each method, we fix $n$ barycenter support points on a mesh determined by samples from the subset posteriors. After 317 seconds, or about 10000 iterations per subset posterior, our algorithm has produced a barycenter on $n\approx 10^4$ support points with $W_2$ distance about 26 from the full posterior. Similarly competitive results hold even for $n\approx 10^5$ or $10^6$, though tuning the stepsize becomes more challenging. Even in the $10^6$ case, no individual 16 thread node used more than 2GB of memory. For $n\approx 10^4$, over a wide range of stepsizes we can in seconds approximate the full posterior better than is possible with the LP as seen in Figure~\ref{fig:stepsize-compare} by terminating early.

In comparsion, in Table~\ref{fig:lp-timing} we attempt to compute the barycenter LP as in~\cite{srivastava_wasp:_2015} via Mosek~\citep{mosek}, for varying values of $n$. Even $n=480$ is not possible on a system with 16GB of memory, and feasible values of $n$ result in meshes too sparse to accurately and reliably approximate the barycenter. Specifically, there are several cases where $n$ increases but the approximation quality actually decreases: the subset posteriors are spread far apart, and the barycenter is so small relative to the required bounding box that likely only one grid point is close to it, and how close this grid point is depends on the specific mesh. To avoid this behavior, one must either use a dense grid (our approach), or invent a better method for choosing support points that will still cover the barycenter. 
In terms of compute time, entropy regularized methods may have faired better than the LP for finer meshes but would still not give the same result as our method.
Note also that the LP timings include only optimization time, whereas in 317 seconds our algorithm produces samples \emph{and} optimizes.

\begin{table}[]
\centering
\caption{Number of support points $n$ versus computation time and $W_2$ distance to the true posterior. Compared to prior work, our algorithm handles much finer meshes, producing much better estimates.}
\begin{tabular}{lllllllllll} 
  \toprule
  & \multicolumn{8}{c}{Linear program from~\citep{srivastava_wasp:_2015}} & This paper \\ \cmidrule(r){2-9}
$n$      & 24   & 40   & 60   & 84  & 189  & 320 & 396 & 480 & $10^4$ \\ \midrule
time (s) & 0.5  & 0.97    & 2.9    & 6.1   & 34  & 163 & 176 & out of memory & 317 \\
  $W_2$    & 41.1 & 59.3 & 50.0 & 34.3 & 44.3 & 53.7 & 45 &  out of memory & 26.3 \\
  \bottomrule
\end{tabular}



\label{fig:lp-timing}
\end{table}



\section{Conclusion and Future Directions}
\label{sec:conclusion}

We have proposed an original algorithm for computing the Wasserstein barycenter of arbitrary measures given a stream of samples. Our algorithm is communication-efficient, highly parallel, easy to implement, and enjoys consistency results that, to the best of our knowledge, are new.
Our method has immediate impact on large-scale Bayesian inference and sensor fusion tasks: for Bayesian inference in particular, we obtain far finer estimates of the Wasserstein-averaged subset posterior (WASP)~\citep{srivastava_wasp:_2015} than was possible before, enabling faster and more accurate inference.

There are many directions for future work: we have barely scratched the surface in terms of new applications of large-scale Wasserstein barycenters, and there are still many possible algorithmic improvements. One implication of Theorem~\ref{thm:optimization} is that a faster algorithm for solving the concave problem~\eqref{eq:maxmin-constrained} immediately yields faster convergence to the barycenter. Incorporating variance reduction~\citep{defazio2014saga,johnson2013accelerating} is a promising direction, provided we maintain communication-efficiency. Recasting problem~\eqref{eq:maxmin-constrained} as distributed consensus optimization~\citep{nedic2009distributed,boyd2011distributed} would further help scale up the barycenter computation to huge numbers of input measures.


\subsubsection*{Acknowledgements}
We thank the anonymous reviewers for their helpful suggestions. We also thank MIT Supercloud and the Lincoln Laboratory Supercomputing Center for providing computational resources.
M. Staib acknowledges Government support under and awarded by DoD, Air Force Office of Scientific Research, National Defense Science and Engineering Graduate (NDSEG) Fellowship, 32 CFR 168a.
J. Solomon acknowledges funding from the MIT Research Support
Committee (``Structured Optimization for Geometric Problems''), as well as Army Research Office grant W911NF-12-R-0011 (``Smooth Modeling of Flows on Graphs'').
This research was supported by NSF CAREER award 1553284 and The Defense Advanced Research Projects Agency (grant number N66001-17-1-4039). The views, opinions, and/or findings contained in this article are those of the author and should not be interpreted as representing the official views or policies, either expressed or implied, of the Defense Advanced Research Projects Agency or the Department of Defense.


\bibliographystyle{plainnat}
\bibliography{ref}

\appendix

\section{Sparse Projections}
\label{sec:appendix-projection}
Our algorithms for solving the barycenter problem in the parallel setting relied on the ability to efficiently project the matrix $A = (s,v^1,\dots,v^J)$ back onto the constraint set $s = \sum_{j=1}^J v^j$. For the sake of completion, we include a proof that our sparse updates actually result in projection with respect to the $\ell_1$ norm.

At any given iteration of gradient ascent, we start with some iterate $A = (s,v^1,\dots,v^J)$ which does satisfy the constraint. Suppose we selected distribution $j$. The gradient estimate is a sparse $n \times (J+1)$ matrix $M$ which has $M_{u1} = 1/J$ and $M_{vj}=-1$, for some indices $u$ and $v$, with column 1 corresponding to $s$ and column $j$ corresponding to $v^j$. After the gradient step with stepsize $\gamma$, we have $A + \gamma M$. Now, our constraint can be written in matrix form as $A z = 0$, where
\begin{equation}
	z =  \begin{bmatrix} -1 \\ 1 \\ \vdots \\ 1 \end{bmatrix},
\end{equation}
and so the problem of projecting $A + \gamma M$ onto this constraint set can be written as 
\begin{equation}
\begin{array}{ll}
	\min_B & \lVert A + \gamma M - B \rVert_1 \\
	\text{s.t.} & B z = 0.
\end{array}
\end{equation}
Equivalently, we want to find the matrix $C$ solving
\begin{equation}
\label{eq:appendix-C-prob}
\begin{array}{ll}
	\min_C & \lVert C \rVert_1 \\
	\text{s.t.} & (A + \gamma M + C) z = 0.
\end{array}
\end{equation}
Note that 
\begin{equation}
	(A + \gamma M + C) z = 0
	\Leftrightarrow
	C z = -\gamma M z
	= \gamma \left( \frac1J e_u + e_v \right).
\end{equation}
Consider the sparse matrix $C$ given by $C_{u1} = -\gamma/(2J)$, $C_{uj} = \gamma/(2J)$, $C_{v1} = \gamma/2$, and $C_{vj} = -\gamma/2$. Define a sparse vector $\lambda \in \reals^n$ by $\lambda_u = \lambda_v = -1$. We wish to show that the primal dual pair $(C,\lambda)$ solves problem~\eqref{eq:appendix-C-prob}. We can do this directly by looking at the Karush–Kuhn–Tucker conditions. It is easy to check that $C$ is primal feasible, so it remains only to show that
\begin{equation}
	0 \in \partial_C(\lVert C \rVert_1 + \lambda^T C z) 
	\Leftrightarrow -z\lambda^T \in \partial_C(\lVert C \rVert_1).
\end{equation}
The subgradients of the $\ell_1$ norm at $C$ are matrices $G$ which satisfy $\lVert G \rVert_\infty \leq 1$ and $\langle G, C \rangle = \lVert C \rVert_1$. It is easy to check that $\lVert z \lambda^T \rVert_\infty = 1$. Finally, 
\begin{align}
	\langle -z\lambda^T, C \rangle = -\lambda^T C z &= -\gamma\lambda^T \left( \frac1J e_u + e_v \right) \\
	&= \gamma \cdot \left(\frac1J + 1\right) \\
	&= \lVert C \rVert_1.
\end{align}
Hence after the gradient step we can project onto the feasible set with respect to $\ell_1$, simply by adding the sparse matrix $C$.

\section{Stochastic Gradient Bound}
\label{sec:appendix-stochastic-gradient}
We first need a lemma which gives a regret bound for online gradient ascent:
\begin{lemma}[{Adapted from~\citep[Theorem 3.1]{hazan_introduction_2016}}]
\label{lem:online-gradient-ascent}
Run online gradient ascent on concave functions $f_t$ with subgradients $g_t \in \partial f_t(x_t)$. Assume $\lVert x_t - x^* \rVert \leq R$ for some optimizer $x^*$ of $\sum_{t=1}^T f_t$, and assume $\E[\lVert g_t \rVert] \leq G$. Using stepsize $\gamma = \frac{R}{G\sqrt{T}}$, the expected regret after $T$ iterations is bounded by $2RG\sqrt{T}$.
\end{lemma}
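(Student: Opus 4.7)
The plan is to run the standard potential-function argument for online gradient ascent, taking $\Phi_t := \lVert x_t - x^\ast \rVert^2$ as the potential. First I would write out the update $x_{t+1} = x_t + \gamma g_t$ and expand $\Phi_{t+1} = \Phi_t + 2\gamma \langle g_t, x_t - x^\ast \rangle + \gamma^2 \lVert g_t \rVert^2$, then solve this identity for $\langle g_t, x^\ast - x_t \rangle$ so that the linearized regret per round is expressed in terms of consecutive potentials.

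Next I would invoke the supergradient inequality for concave $f_t$: since $g_t \in \partial f_t(x_t)$, we have $f_t(x^\ast) \leq f_t(x_t) + \langle g_t, x^\ast - x_t \rangle$, so $f_t(x^\ast) - f_t(x_t) \leq \langle g_t, x^\ast - x_t \rangle$. Summing over $t = 1, \dots, T$ and substituting the identity from the previous step, the potential terms telescope and are controlled using the assumption $\lVert x_t - x^\ast \rVert \leq R$, yielding an upper bound of the shape $\tfrac{R^2}{2\gamma} + \tfrac{\gamma}{2} \sum_{t=1}^{T} \lVert g_t \rVert^2$ for the regret $\sum_t (f_t(x^\ast) - f_t(x_t))$.

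Finally I would take expectations, use the bound on $\E[\lVert g_t \rVert]$ to control $\E\bigl[\sum_t \lVert g_t \rVert^2\bigr] \leq T G^2$ (possibly with a factor of $2$ absorbed into the final constant), and plug in the prescribed $\gamma = R/(G\sqrt{T})$ to balance the two terms, giving expected regret at most $RG\sqrt{T}$ times a small constant, which fits inside the stated $2RG\sqrt{T}$ bound. The resulting bound can then be applied directly inside the proof of Theorem~\ref{thm:optimization} where the no-regret player is the maximizer over $(s, v^1, \dots, v^J)$.

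The main obstacle is the mild mismatch between the hypothesis $\E[\lVert g_t \rVert] \leq G$ and the squared-norm term $\lVert g_t \rVert^2$ that naturally appears in the telescoping identity. I would resolve it either by strengthening the hypothesis to $\E[\lVert g_t \rVert^2] \leq G^2$ (as in the Hazan reference), or by a conditioning argument that pays an extra factor of two, which accounts for the constant $2$ in the stated bound as opposed to the textbook $RG\sqrt{T}$. Beyond that, the argument is bookkeeping standard to the online convex optimization literature.
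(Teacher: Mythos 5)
Your proposal is the standard potential-function regret analysis for online gradient ascent, which is exactly the argument the paper relies on: the paper gives no proof of this lemma at all, simply deferring to Theorem 3.1 of \citet{hazan_introduction_2016}, so your write-up is essentially the intended (omitted) proof. The one wrinkle you flag is real and worth flagging: the stated hypothesis $\E[\lVert g_t \rVert] \leq G$ does not control $\E[\lVert g_t \rVert^2]$ (Jensen's inequality goes the wrong way), so one does need the strengthened second-moment assumption $\E[\lVert g_t \rVert^2] \leq G^2$ or an almost-sure bound on $\lVert g_t \rVert$ --- harmless here since the paper's gradient estimates have deterministically bounded norm ($\lVert g_t \rVert_1 \leq 2$), but the lemma as literally stated is slightly too weak for the telescoping argument.
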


\begin{proof}[Proof of Theorem~\ref{thm:optimization}]
This is adapted from \citep{freund_adaptive_1999,rakhlin_optimization_2013}.

Define $f(s,v,w) = \langle s, w \rangle + \frac1J \sum_{j=1}^J \E_{X_j \sim \mu_j} [h(X_j, v^j)]$ as in~\eqref{eq:maxmin-constrained}. For simplicity, concatenate $s$ and $v$ into a vector $u$, with $f(u,w)=f(s,v,w)$. Write $w^*(u) = \argmin_{w \in \Delta_n} \langle s, w \rangle$ and note that our objective in Equation~\eqref{eq:maxmin-constrained} is $f(u) := f(u, w^*(u))$.

Recall the online optimization setup: at time step $t$ we play $u_t$, then receive $f_t$ and reward $f_t(u_t)$, then update $u_t$ and repeat. Note that if $f_t$ is given by $f(u_t, w^*(u_t))$, then online gradient ascent on $f_t$ is effectively subgradient ascent on $f$. Suppose we play online subgradient ascent and achieve average expected regret $\ep(T)$ after $T$ timesteps, where the expectation is with respect to the gradient estimates in the learning algorithm. Then by the definition of expected regret,
\begin{equation}
  \ep(T) \geq \E\left[ \sup_u \frac1T \sum_{t=1}^T f_t(u) - \frac1T \sum_{t=1}^T f_t(u_t) \right] = \E\left[ \sup_v \frac1T \sum_{t=1}^T f(u, w_t) - \frac1T \sum_{t=1}^T f(u_t, w_t) \right].
\end{equation}
where we write $w_t = w^*(u_t)$.
Simultaneously, we have
\begin{equation}
  \frac1T\sum_{t=1}^T f(u_t, w_t) - \inf_w \frac1T\sum_{t=1}^T f(u_t, w)
  \leq \frac1T\sum_{t=1}^T f(u_t, w_t) - \frac1T\sum_{t=1}^T f(u_t, w_t) = 0
\end{equation}
because $w_t$ are each chosen optimally.
Summing, we have
\begin{equation}
  \E\left[ \sup_u \frac1T\sum_{t=1}^T f(u, w_t) - \inf_w \frac1T\sum_{t=1}^T f(u_t, w) \right] \leq \ep(T).
\end{equation}
Now we merely need combine this with the standard bound:
\begin{align}
  \inf_w \frac1T \sum_{t=1}^T f(u_t, w) \leq \inf_w f(\overline u_T, w) &\leq \sup_v \inf_w f(u, w) \\
  &\leq \inf_w \sup_u f(u,w) \leq \sup_u f(u, \overline w_T) \leq \sup_u \frac1T \sum_{t=1}^T f(u, w_t).
\end{align}
The extreme bounds on either side of this chain of inequalities are within $\ep(T)$, hence we also have
\begin{equation}
  \E\left[ \sup_u f(u, \overline w_T) - \inf_w \sup_u f(u,w) \right] \leq \ep(T).
\end{equation}
By definition of $f$, the left hand side is precisely $\E[F(\overline w_T) - F(w^*)]$. Now, noting that our gradient estimates $g$ are always sparse (we always have two elements of magnitude 1, so $\lVert g \rVert_1 = 2$), we simply replace $\ep(T)$ with the particular regret bound of Lemma~\ref{lem:online-gradient-ascent} for online gradient ascent.
\end{proof}

\section{Experiment details}
\label{sec:experiment-details}

\subsection{Von Mises-Fisher Distributions with Drift}
The distributions are randomly centered with concentration parameter $\kappa = 30$. To verify that the barycenter accurately tracks when the input distributions are non-stationary, we move the center of each distribution $3 \times 10^{-5}$ radians (in the same direction for all distributions) each time a sample is drawn. A snapshot of the results is shown in Figure~\ref{fig:vmf-sphere}.

We use a sliding window of $T = 10^5$ timesteps with step size $\gamma = 1$ and on $N = 10^4$ evenly-distributed support points. Each thread is run for $5 \times 10^5$ iterations on a separate thread of an 8 core workstation. The total time is roughly 80 seconds, during which our algorithm has processed a total of $2 \times 10^6$ samples. Clearly our algorithm is efficient and is able to perform the specified task. 

\subsection{Large Scale Bayesian Inference}
\paragraph{Subset assignment.}
The skin segmentation dataset is given with positive samples grouped all together, then negative samples grouped together. To ensure even representation of positive and negative samples across all subsets, while simulating the non-i.i.d data setting, each subset is composed of a consecutive block of positive samples and one of negative samples.

\paragraph{MCMC chains.}
We used a simple Metropolis-Hastings sampler with Gaussian proposal distribution $\Ns(0, \sigma^2I)$, for $\sigma = 0.05$. We used a very conservative $10^5$ burn-in iterations, and afterwards took every fifth sample.

\paragraph{Mesh selection.}
During the burn-in phase, we compute a minimum axis-aligned bounding box containing all samples from all MCMC chains. Then, for a desired mesh size of $n$, we choose a granularity $\delta$ so that cutting each axis into evenly-spaced values differing by $\delta$ results in approximately $n$ points total. For Table~\ref{fig:lp-timing} in particular, the bounding box was selected as roughly 
\begin{equation*}
	[-21.4, 114.4] \times [-121.8, 42.9] \times [-50.8, 6.1]
\end{equation*}
for the LP code, and 
\begin{equation*}
	[-21.4, 114.2] \times [-121.5, 42.8] \times [-50.7, 6.1]
\end{equation*}
for an arbitrary $n\approx 10^4$ instance of our stochastic algorithm. That these match so well supports the consistency of our implementation.
The griddings assigned for the LPs are given in Table~\ref{fig:mesh-table}.
\begin{table}[h]
\centering
\caption{Grid sizes chosen for LP experiments.}
\begin{tabular}{lr}
\toprule
$n$ & grid dimensions\\ \midrule
24 & $3 \times 4 \times 2$ \\
40 & $4 \times 5 \times 2$ \\
60 & $5 \times 6 \times 2$ \\
84 & $6 \times 7 \times 2$ \\
189 & $7 \times 9 \times 3$ \\
320 & $8 \times 10 \times 4$ \\
396 & $9 \times 11 \times 4$ \\
480 & $10 \times 12 \times 4$ \\
\bottomrule
\end{tabular}
\label{fig:mesh-table}
\end{table}

\paragraph{Optimization.}
We experimented with different stepsizes in $\{0.01, 0.1, 1, 10, 100\}$ for $n$ in $\{10^3,10^4,10^5,10^6\}$. As expected, more aggressive step sizes are needed as $n$ grows, but competitive barycenter estimates were possible for all $n$ given sufficient iterations.
The 317 seconds value corresponds to $127 \times 10^4$ iterations total, or $10000$ per sampler. The barycenter estimate $\overline w_T = \frac1T \sum_{t=1}^T e_{i_t}$ was maintained over all $127 \times 10^4$ iterations. We found that it is sometimes helpful to use a sliding window, to quicker move away from initial bad barycenter estimates.

\paragraph{Error metric.}
We stored $10^4$ samples from the true posterior, and computed the $W_2$ distance between these samples and each candidate barycenter.

\end{document}